\newcommand{\hsqt}[0]{\frac{\sqrt{3}}{2}}
\newcommand{\half}[0]{\frac{1}{2}}
  \def\doi#1{\url{https://doi.org/#1}}}
\begin{document}
\title{Generalizing and Unifying Gray-box Combinatorial Optimization Operators}
%
%
\author{Francisco Chicano\inst{1}\orcidID{0000-0003-1259-2990} \and
Darrell Whitley\inst{2}\orcidID{0000-0002-2752-6534} \and
Gabriela Ochoa\inst{3}\orcidID{0000-0001-7649-5669} \and
Renato Tinós\inst{4}\orcidID{0000-0003-4027-8851}
}
\authorrunning{Chicano, Whitley, Ochoa, Tinós}
%
\institute{ITIS Software, University of Malaga, Spain \\
\email{chicano@uma.es} \and
Colorado State University, USA \\
\email{whitley@cs.colostate.edu} \and
University of Stirling, Scotland, UK \\
\email{gabriela.ochoa@stir.ac.uk} \and
University of São Paulo, Brazil \\
\email{rtinos@ffclrp.usp.br}}
\maketitle              
\begin{abstract}
Gray-box optimization leverages the information available about the mathematical structure of an optimization problem to design efficient search operators.  Efficient hill climbers and crossover operators have been proposed in the domain of pseudo-Boolean optimization and also in some permutation problems. However, there is no general rule on how to design these efficient operators in different representation domains. This paper proposes a general framework that encompasses all known gray-box operators for combinatorial optimization problems. The framework is general enough to shed light on the design of new efficient operators for new problems and representation domains. We also unify the proofs of efficiency for gray-box hill climbers and crossovers and show that the mathematical property explaining the
speed-up of gray-box crossover operators, also explains the efficient identification of improving moves in gray-box hill climbers. We illustrate the power of the new framework by proposing an efficient hill climber and crossover for two related permutation problems: the Linear Ordering Problem and the Single Machine Total Weighted Tardiness Problem.
\keywords{Gray-box optimization \and hill climbing \and partition crossover \and combinatorial optimization \and group theory}
\end{abstract}
\section{Introduction}

In the context of combinatorial optimization with metaheuristics, it is common to assume that the algorithm does not know anything about the optimization problem except the quality of the solutions. These values guide the search towards better solutions. The objective function $f$ is a \emph{black-box} that takes a solution $x$ and returns a real value $f(x)$ representing its cost (to minimize) or quality (to maximize). For real-world applications, we often have more information than just a black-box evaluation function.  During the last 15 years, several proposals used this additional problem information to design more effective variation operators. 
This is how operators like Partition Crossover for the Traveling Salesman Problem (TSP)~\cite{WhitleyHH09} or the constant time hill climber for NK landscapes and MAX-$k$-SAT~\cite{WhitleyC12} appeared. This line of research, known as \emph{gray-box optimization}~\cite{WhitleyCG16}, has produced very efficient algorithms like DRILS~\cite{ChicanoWOT17} and PRILS~\cite{CanonneDCO23}, and very efficient ways to improve existing algorithms~\cite{ChenWTC18}.  Many so-called ``black-box" optimization methods are not black-box optimizers;  every modern inexact MAX-SAT or TSP solver exploits problem structure \cite{helsgaun2000effective,Hoos2004SLS,Hoos2004maxsat,nagata2013}.

In the gray-box optimization literature, we can find examples of hill climbers that can identify improving moves in constant time and crossover operators able to explore an exponential number of offspring in linear time. We prove in this paper that the key ingredient behind the efficiency in both cases is the same mathematical identity (decomposition theorem). We propose in Section~\ref{sec:framework} a mathematical framework that unifies the math behind the efficient gray-box crossovers and hill climbers. We also show that the results are general and can be applied to any representation space. To illustrate how to use the framework for designing new gray-box operators, we propose in Section~\ref{sec:new-gb-operators} a new hill climber and partition crossover based on the same mathematical identity for the Linear Ordering Problem and the Single Machine Total Weighted Tardiness, two NP-hard permutation problems.

The new framework allows us to revisit existing gray-box operators (see Section~\ref{sec:previous-gb-operators}) to find opportunities for improvement that were not obvious in the original operator formulation. These opportunities appear after analyzing the framework using the Fourier transform over finite groups. We include in Section~\ref{sec:background} a short introduction to the concepts of group, group representation, and Fourier transform over finite groups to make the paper as self-contained as possible.

\section{Background}
\label{sec:background}


A \emph{group} is a pair $(G, \cdot)$, where $G$ is a set of elements and $\cdot: G \times G \rightarrow G$ is a binary operator defined over the elements of the group with the following properties:
\begin{itemize}
    \item Associative: $g_1 \cdot (g_2 \cdot g_3)=(g_1 \cdot g_2) \cdot g_3$
    \item Neutral element: there is an element $e\in G$ such that $e \cdot g = g \cdot e = g$ for all $g \in G$.
    \item Inverse: for each $g \in G$ there is another element $g^{-1}$ such that $g \cdot g^{-1} = g^{-1} \cdot g = e$.
\end{itemize}

Two examples of finite groups relevant in this paper are $(\mathbb{Z}_2^n,+)$, that is, binary strings with the bitwise exclusive OR (XOR); and the symmetric group (permutations) $(S_n,\circ)$ with the composition operator. When the operation is clear from the context, we will omit it and simply write $\mathbb{Z}_2^n$ or $S_n$. Observe that while in the binary case, the operation (the sum) is commutative, in the permutation case the composition is not commutative. A \emph{group action} of a group $G$ over a set $X$ is a homomorphism of the group $G$ into a subgroup of the bijective functions of $X$ to itself. That is, if $g \in G$, $x \in X$ and $g x \in X$ represents the action of $g$ applied to element $x$, then $(g \cdot h)x = g (h x)$.

\subsection{Group representation}

A \emph{representation} of a group $G$ is a mapping $\rho:G \rightarrow GL(V)$ between the elements of the group $G$ and the automorphisms of a vector space $V$ such that $\rho(g_1 \cdot g_2)=\rho(g_1) \circ \rho(g_2)$. 
An \emph{automorphism} is an invertible linear map from a vector space $V$ to itself.
Without loss of generality, we will assume in the following $V=\mathbb{C}^n$, where the vectors are $n$-tuples of complex numbers and the automorphisms are non-singular squared complex matrices of size $n \times n$, that is, matrices with a nonzero determinant. 
A group representation translates the group elements into matrices and the group operation into matrix multiplication. 

Not all group representations are equally important.  Two representations, $\rho_1$ and $\rho_2$ are \emph{equivalent} if there exists a matrix $P\in \mathbb{C}^{n\times n}$ such that $\rho_1(g) = P \rho_2(g) P^{-1}$ for all $g \in G$. Equivalent representations provide the same information; we will be interested in a set of \emph{inequivalent} representations. A representation $\rho$ is \emph{reducible} if we can write $\rho(g)$ as a block-diagonal matrix
\begin{equation}
    \rho(g) = \left( \begin{array}{c|c}
    \rho_1(g) & 0 \\
    \hline
    0 & \rho_2(g)
    \end{array}\right),
\end{equation}
for all $g\in G$. Reducible representations do not provide more information than their component representations and we will discard them, focusing on \emph{irreducible} representations. When the group is finite there is also a finite set of \emph{inequivalent irreducible} representations, called \emph{irreps}, and the cardinality of this set is exactly the number of conjugacy classes of the group~\cite{Fulton2004}.
When the group is commutative all the irreps are scalar values (matrices of size $1 \times 1$). This is the case in the group $\mathbb{Z}_2^{n}$, the one we use for pseudo-Boolean optimization. The irreps of $\mathbb{Z}_2^{n}$ are the Walsh functions~\cite{Walsh:1923}. However, when the group is not commutative, some irreps need to be expressed with squared matrices of size at least 2, as the next example shows.

\begin{example}
We will denote the permutations using the \emph{cycle notation}. A cycle $(c_1\; c_2\; \ldots\; c_k)$ represents a permutation that maps $c_1$ into $c_2$, $c_2$ into $c_3$ and so on. Element $c_k$ is mapped into $c_1$. The elements missing in the cycle are mapped into themselves.
For example, $(1 \; 2)$ denotes a permutation in which element 1 is mapped into element 2 and element 2 is mapped into element 1. The identity permutation, which maps every element to itself, is commonly denoted with $(1)$. Every arbitrary permutation can be written as the product of disjoint cycles.

Let's consider the group $S_3$ of permutations of three elements. A representation $\rho$ for their elements is in Table~\ref{tab:s3-repr}.
\begin{table}[!ht]
\centering
\caption{Example of representation of $S_3$ using $2 \times 2$ matrices. This is the $\rho_{(2,1)}$ irreducible representation in the Young Orthogonal Representation.}
\label{tab:s3-repr}
\begin{tabular}{ccccccc}
\toprule
$\pi$ & (1) & (2 3)  & (1 2) & (1 2 3) & (1 3 2) & (1 3) \\
\hline
$\rho(\pi)$ &
$\left(\begin{array}{rr}
1 & 0 \\
0 & 1
\end{array}\right)$ &
$\left(\begin{array}{rr}
-\half & \hsqt \\
\hsqt & \half
\end{array}\right)$ &
$\left(\begin{array}{rr}
1 & 0 \\
0 & -1
\end{array}\right)$ &
$\left(\begin{array}{rr}
-\half & -\hsqt \\
\hsqt & -\half
\end{array}\right)$ &
$\left(\begin{array}{rr}
-\half & \hsqt \\
-\hsqt & -\half
\end{array}\right)$ &
$\left(\begin{array}{rr}
-\half & -\hsqt \\
-\hsqt & \half
\end{array}\right)$ \\
\bottomrule
\end{tabular}
\end{table}
\qed
\end{example}

\subsection{Fourier transform over finite groups}

Let $f: G \rightarrow \mathbb{C}$ be a complex-valued function defined on group $G$ and $\rho$ a representation of $G$. The \emph{Fourier transform} of $f$ at $\rho$ is defined as
\begin{equation}
\label{eqn:fourier-transform}
\hat{f}(\rho) = \sum_{g \in G} f(g) \rho(g).
\end{equation}

Observe that $\hat{f}(\rho)$ is, in general, a matrix because $\rho(g)$ is a matrix. In general, the Fourier transform of a function $f$ at a particular representation does not provide all the information to reconstruct $f$,  
but the Fourier transform at a set of irreps does. The inverse Fourier transform is defined as
\begin{equation}
\label{eqn:inverse-fourier}
f(g) = \frac{1}{|G|} \sum_{\rho \in \text{irreps}} d_\rho \text{Tr} \left( \hat{f}(\rho) \rho(g)^{-1}  \right),
\end{equation}
where $d_{\rho}$ is the dimension of the vector space associated to representation $\rho$, $\text{Tr}$ is the trace of a matrix, and $\rho(g)^{-1}$ denotes the inverse  of $\rho(g)$.


\section{The unifying framework}
\label{sec:framework}

In this section, we present a mathematical framework that summarizes in one single theorem the mathematical background behind some efficient hill climbers and crossovers in gray-box optimization. In particular, the framework generalizes the math behind Hamming Ball Hill Climber (HBHC)~\cite{ChicanoWS14} and Partition Crossover (PX)~\cite{TinosWC15,TinosWO14,WhitleyHH09}. Explaining with a formula the math behind hill climbers and crossovers yields a unification in gray-box combinatorial optimization. Moreover, the formula does not assume a concrete representation, but it is valid for any search space. This yields a generalization of all gray-box results known up to date. The new framework should be able to open the door to new efficient gray-box operators in combinatorial optimization problems
We illustrate this generalization by providing two new gray-box operators in Section~\ref{sec:new-gb-operators}.

We assume that our search space is a finite set $X$. 
A \emph{move} in the solution space is a function $h: X \rightarrow X$ that maps any solution $x \in X$ to another solution $h(x)$ that is generated by that move.
\begin{example}

One example of a move is a bit flip in the binary representation. If $1001 \in \mathbb{Z}_2^4$ is a binary string representing a solution of an optimization problem, we can flip the first bit to get $0001$. Here, the move is represented by the function $h(x)=x + 1000$ in $\mathbb{Z}_2^4$. \qed
\end{example}
\begin{example}
One example of a move in the permutation space is an exchange of two positions in the permutation. The move $h(\sigma)= \sigma \cdot (3 \; 7)$ represents an exchange of elements 3 and 7 (we use cycle notation), and the move $h(\sigma)= (3 \; 7) \cdot \sigma $ represents an exchange of elements \emph{at positions} 3 and 7. \qed
\end{example}

We now define the \emph{delta function} for a move $h$ and function $f: X \rightarrow \mathbb{R}$ as
\begin{align}
& \label{eqn:delta-r} \left(\Delta_{h} f\right)(x) = f(h(x)) - f(x).
\end{align}


We  denote the composition of two moves $h_1$ and $h_2$  with $h_2 \circ h_1$ and we define it as $(h_2 \circ h_1) (x) = h_2(h_1(x))$. This is equivalent to applying move $h_1$ first to $x$ and then applying move $h_2$. Composition is not commutative in general, but we will focus on commutative moves in this paper.

\begin{definition}
Let $h_1, h_2: X \rightarrow X$ be two moves that commute under function composition, that is, $h_1 \circ h_2 = h_2 \circ h_1$. We say that the moves are \emph{non-interacting for the set of solutions $Y \subseteq X$} when
\begin{equation}
\label{eqn:non-interaction-def}\left(\Delta_{h_2} f\right)(h_1(x)) = \left(\Delta_{h_2} f\right)(x) \;\; \forall x \in Y.
\end{equation}
If we omit the reference to the set $Y$, then the moves are non-interacting for the whole search space $X$.
\end{definition}

We can characterize the non-interaction using a different but equivalent expression, which is more common in the literature.
\begin{proposition}
Let $h_1, h_2: X \rightarrow X$ be two moves that commute, then the moves are \emph{non-interacting} for set $Y \subseteq X$ if and only if
\begin{equation}
    \label{eqn:non-interaction-char}\left(\Delta_{h_1 \circ h_2} f\right)(x) = \left(\Delta_{h_1} f\right)(x) + \left(\Delta_{h_2} f\right)(x) \;\; \forall x \in Y.
\end{equation}
\end{proposition}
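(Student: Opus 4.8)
The plan is to reduce both statements to a single algebraic ``telescoping'' identity for the delta function that holds for arbitrary moves, and then to invoke commutativity only at the very end to bridge an asymmetry between the two conditions. First I would establish the universal identity
\[
\left(\Delta_{h_1 \circ h_2} f\right)(x) = \left(\Delta_{h_1} f\right)(h_2(x)) + \left(\Delta_{h_2} f\right)(x),
\]
which needs no hypothesis at all. By definition its left-hand side is $f(h_1(h_2(x))) - f(x)$, and inserting and subtracting the intermediate value $f(h_2(x))$ splits it as $\left[f(h_1(h_2(x))) - f(h_2(x))\right] + \left[f(h_2(x)) - f(x)\right]$, where the first bracket is $\left(\Delta_{h_1} f\right)(h_2(x))$ and the second is $\left(\Delta_{h_2} f\right)(x)$.

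With this identity available, comparing it against the target equation~(\ref{eqn:non-interaction-char}) shows at once that~(\ref{eqn:non-interaction-char}) holds on $Y$ if and only if $\left(\Delta_{h_1} f\right)(h_2(x)) = \left(\Delta_{h_1} f\right)(x)$ for all $x \in Y$. This is precisely the non-interaction condition~(\ref{eqn:non-interaction-def}), but with the roles of $h_1$ and $h_2$ interchanged. So the only work remaining is to show that this ``$h_2$ then $h_1$'' form of non-interaction is equivalent to the stated ``$h_1$ then $h_2$'' form in~(\ref{eqn:non-interaction-def}).

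This is where commutativity enters and is the crux of the argument. Applying the same telescoping identity with the two moves swapped gives $\left(\Delta_{h_2 \circ h_1} f\right)(x) = \left(\Delta_{h_2} f\right)(h_1(x)) + \left(\Delta_{h_1} f\right)(x)$. Since $h_1 \circ h_2 = h_2 \circ h_1$, the two left-hand sides coincide, and equating the two right-hand sides yields
\[
\left(\Delta_{h_1} f\right)(h_2(x)) - \left(\Delta_{h_1} f\right)(x) = \left(\Delta_{h_2} f\right)(h_1(x)) - \left(\Delta_{h_2} f\right)(x).
\]
Hence one side vanishes on $Y$ exactly when the other does, which identifies the swapped condition with~(\ref{eqn:non-interaction-def}) and chains the equivalences $(\ref{eqn:non-interaction-char}) \Leftrightarrow (\ref{eqn:non-interaction-def})$ together.

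I expect the only subtle point to be exactly this swap: the natural decomposition of $\Delta_{h_1 \circ h_2} f$ produces $\Delta_{h_1} f$ evaluated at $h_2(x)$, whereas the definition of non-interaction is phrased with $\Delta_{h_2} f$ evaluated at $h_1(x)$. Everything else is bookkeeping; commutativity is the single hypothesis the proposition actually consumes, and it is what guarantees that these two asymmetric forms encode the same information.
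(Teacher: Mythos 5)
Your proof is correct and, at its core, performs the same insert-and-subtract telescoping as the paper: the paper likewise splits $f(h_1(h_2(x))) - f(x)$ at the intermediate value $f(h_2(x))$ and reorders terms to show that Eq.~\eqref{eqn:non-interaction-char} is equivalent to $(\Delta_{h_1} f)(h_2(x)) = (\Delta_{h_1} f)(x)$ on $Y$. The difference is what happens next. The paper simply declares this last identity to be the definition of non-interaction, but Eq.~\eqref{eqn:non-interaction-def} as stated reads $(\Delta_{h_2} f)(h_1(x)) = (\Delta_{h_2} f)(x)$ --- the same condition with the roles of $h_1$ and $h_2$ exchanged --- and the paper never reconciles the two forms; notably, its chain of equalities never invokes the commutativity hypothesis at all. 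You identified exactly this asymmetry as the crux and closed it: applying the telescoping identity to both orderings and using $h_1 \circ h_2 = h_2 \circ h_1$ gives $(\Delta_{h_1} f)(h_2(x)) - (\Delta_{h_1} f)(x) = (\Delta_{h_2} f)(h_1(x)) - (\Delta_{h_2} f)(x)$ pointwise, so one swapped form vanishes on $Y$ exactly when the other does. This extra lemma is what makes the equivalence genuinely match the stated definition, and it is the only place the commutativity hypothesis is actually consumed --- a point your write-up makes explicit while the paper leaves it implicit (or overlooks it). So your proposal is not merely correct; it is the completed version of the paper's own argument.
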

\begin{proof}
    Let's transform the equality~\eqref{eqn:non-interaction-char} into the equality~\eqref{eqn:non-interaction-def} using definitions and algebraic manipulations:
    \begin{align*}
\left(\Delta_{h_1 \circ h_2} f\right)(x) &= \left(\Delta_{h_1} f\right)(x) + \left(\Delta_{h_2} f\right)(x)\\
f(h_1(h_2(x))) - f(x) &= f(h_1(x)) - f(x) + f(h_2(x)) - f(x) \text{ (definition of $\Delta$)} \\
f(h_1(h_2(x))) - f(h_2(x))  &= f(h_1(x)) - f(x) \text{ (reordering terms)}\\
\left(\Delta_{h_1} f\right)(h_2(x)) &= \left(\Delta_{h_1} f\right)(x) \text{ (definition of $\Delta$)}.
    \end{align*}
\qed
\end{proof}

The efficiency and efficacy of the gray-box operators are based on the separability of the delta function when we decompose a move. We prove here a generalization of this separability that is valid for all representations.

\begin{theorem}[Decomposition]
\label{thm:decomposition}
    Let $h_i: X \rightarrow X$ with $1\leq i \leq m$ be a family of pairwise commutative moves and the set $Y \subseteq X$. For each pair of moves $h_i$, $h_j$ we also require that they are non-interacting in the sets $H(Y)$ for all moves $H$ that are compositions of moves $h_k$ with $1\leq k \leq m$ that do not include $h_i$ and $h_j$. $H$ can also be the identity function (empty composition of moves). It holds
\begin{equation}
    \label{eqn:delta-sum} \left(\Delta_{\bigcirc_{i=m}^{1} h_i} f\right)(x) = \sum_{i=1}^{m} \left(\Delta_{h_i} f\right)(x) \;\; \forall x \in Y.
\end{equation}
\end{theorem}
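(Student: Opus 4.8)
The plan is to prove the identity by a double induction: an outer induction on the number of moves $m$, and an inner induction that ``slides'' the last move past a partial composition of the remaining moves. For $m=1$ the statement is the tautology $(\Delta_{h_1} f)(x) = (\Delta_{h_1} f)(x)$, and for $m=2$ it is exactly the characterization in Equation~\eqref{eqn:non-interaction-char} (using commutativity to identify $h_2\circ h_1$ with $h_1\circ h_2$), so the base cases are immediate.

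For the inductive step, I would write $H = \bigcirc_{i=m}^{1} h_i = h_m \circ H'$ with $H' = \bigcirc_{i=m-1}^{1} h_i$. Inserting $\pm f(H'(x))$ and using the definition of $\Delta$ gives $(\Delta_H f)(x) = (\Delta_{h_m} f)(H'(x)) + (\Delta_{H'} f)(x)$. The second term is handled by the outer inductive hypothesis applied to the subfamily $\{h_1,\dots,h_{m-1}\}$, so I would first check that this subfamily inherits the theorem's hypotheses: any composition of moves from $\{h_1,\dots,h_{m-1}\}$ avoiding a pair $h_i,h_j$ with $i,j\le m-1$ is in particular a composition from $\{h_1,\dots,h_m\}$ avoiding that pair, hence already assumed non-interacting. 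This yields $(\Delta_{H'} f)(x)=\sum_{i=1}^{m-1}(\Delta_{h_i} f)(x)$ for all $x\in Y$.

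It then remains to show $(\Delta_{h_m} f)(H'(x)) = (\Delta_{h_m} f)(x)$ on $Y$, which I would establish by the inner induction. Setting $G_k = \bigcirc_{i=k}^{1} h_i$ (with $G_0$ the identity and $G_{m-1}=H'$), I would prove $(\Delta_{h_m} f)(G_k(x)) = (\Delta_{h_m} f)(x)$ for all $x\in Y$ by induction on $k$. Writing $G_k = h_k \circ G_{k-1}$, the key observation is that $G_{k-1}$ is a composition of moves drawn from $\{h_1,\dots,h_{k-1}\}$, which excludes both $h_m$ and $h_k$; hence the pair $h_k, h_m$ is non-interacting on the set $G_{k-1}(Y)$ by hypothesis. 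Since $G_{k-1}(x)\in G_{k-1}(Y)$, the non-interaction definition (symmetric in the two moves by the Proposition) gives $(\Delta_{h_m} f)(h_k(G_{k-1}(x))) = (\Delta_{h_m} f)(G_{k-1}(x))$, and the inner hypothesis collapses the right-hand side to $(\Delta_{h_m} f)(x)$. Taking $k=m-1$ and combining with the previous paragraph completes the step.

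I expect the main obstacle to be bookkeeping rather than any deep idea: one must verify at each sliding step that the point being evaluated genuinely lies in the prescribed set $G_{k-1}(Y)$ and that the intermediate composition avoids exactly the two moves whose non-interaction is invoked. Commutativity plays a supporting role throughout, since it makes the full composition $\bigcirc_{i=m}^{1} h_i$ and each partial composition $G_k$ independent of the order chosen, and (via the Proposition) renders non-interaction symmetric so that it applies regardless of which of the two moves carries the $\Delta$.
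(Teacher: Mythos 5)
Your proposal is correct and follows essentially the same route as the paper's proof: induction on $m$, telescoping via $\pm f(H'(x))$ to split off $(\Delta_{h_m} f)(H'(x)) + (\Delta_{H'} f)(x)$, handling the second term by the inductive hypothesis and the first by sliding $h_m$ past the partial composition using the pairwise non-interaction hypothesis. The only difference is one of rigor, not substance: where the paper says Eq.~\eqref{eqn:non-interaction-def} can be ``applied iteratively,'' you spell this out as an explicit inner induction on $G_k$ (verifying at each step that $G_{k-1}$ avoids both $h_k$ and $h_m$ and that the evaluation point lies in $G_{k-1}(Y)$), and you also check that the subfamily $\{h_1,\dots,h_{m-1}\}$ inherits the theorem's hypotheses --- both points the paper leaves implicit.
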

\begin{proof}
    We can prove Eq.~\eqref{eqn:delta-sum} using induction over $m$. For $m=1$ the equality is trivial because we have the same expression on both sides. Let's assume that the equality holds for $m$ and let's prove it for $m+1$. We assume $x \in Y$ and write
    \begin{align*}
        \left(\Delta_{\bigcirc_{i=m+1}^{1}  h_i} f\right)(x) &= f((\bigcirc_{i=m+1}^{1}  h_i)(x)) - f(x) \\
        &= f((\bigcirc_{i=m+1}^{1}  h_i)(x)) - f((\bigcirc_{i=m}^{1}  h_i)(x)) \\
        & \qquad+ f((\bigcirc_{i=m}^{1}  h_i)(x))- f(x) \\
        &= (\Delta_{h_{m+1}} f) ((\bigcirc_{i=m}^{1}  h_i)(x)) + \left(\Delta_{\bigcirc_{i=m}^{1} h_i} f\right)(x).
    \end{align*}
    Since $h_{m+1}$ and all the $h_{i}$ are commutative and non-interacting in $H(Y)$ for any composition $H$ of the other moves we can apply Eq.~\eqref{eqn:non-interaction-def} iteratively to get $(\Delta_{h_{m+1}} f) ((\bigcirc_{i=m}^{1}  h_i)(x)) = (\Delta_{h_{m+1}} f) (x) $. If we use the induction hypothesis we can write
    \[
    (\Delta_{h_{m+1}} f) ((\bigcirc_{i=m}^{1}  h_i)(x)) + \left(\Delta_{\bigcirc_{i=m}^{1} h_i} f\right)(x) = (\Delta_{h_{m+1}} f) (x) +  \sum_{i=1}^{m} \left(\Delta_{h_i} f\right)(x),
    \]
    proving the result. \qed
\end{proof}

\begin{corollary}[Lattice linear equation]
    \label{cor:lattice}
    Let $W \subseteq \{1,2,\ldots,m\}$ be a subset of indices for the moves $h_i$. We denote with $H_W$ the move created after the composition of the $h_i$ with $i \in W$, $H_W=\bigcirc_{i \in W} h_i$. Then,
    \begin{equation}
    \label{eqn:lattice}
    \left(\Delta_{H_W} f\right)(x) = \sum_{i \in W} \left(\Delta_{h_i} f\right)(x) \;\; \forall x \in Y.
    \end{equation}
\end{corollary}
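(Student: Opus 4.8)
The plan is to derive the corollary as a direct specialization of the Decomposition Theorem applied to the subfamily of moves indexed by $W$, rather than re-running the induction from scratch. First I would fix an enumeration $W = \{i_1, i_2, \ldots, i_\ell\}$ and relabel the corresponding moves as $g_k = h_{i_k}$ for $1 \le k \le \ell$. Because all the $h_i$ are pairwise commutative, the composition $\bigcirc_{i\in W} h_i$ does not depend on the order in which the factors are applied, so $H_W$ coincides with the ordered composition $\bigcirc_{k=\ell}^{1} g_k$, which is exactly the form of the composed move appearing in Theorem~\ref{thm:decomposition} for the family $\{g_k\}$.

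Next I would verify that the family $\{g_1,\ldots,g_\ell\}$ inherits both hypotheses of the theorem. Pairwise commutativity is immediate, since the $g_k$ are a subset of the globally commuting $h_i$. The non-interaction hypothesis requires a small but essential observation: for each pair $g_a, g_b$ the theorem demands non-interaction on $H(Y)$ for every composition $H$ of the remaining moves $g_k$ with $k \notin \{a,b\}$, that is, of moves $h_i$ with $i \in W \setminus \{i_a, i_b\}$. Since $W \setminus \{i_a,i_b\} \subseteq \{1,\ldots,m\} \setminus \{i_a,i_b\}$, every such $H$ is already among the compositions for which the original hypothesis for the full family guarantees non-interaction. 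Hence the assumption made for $\{h_1,\ldots,h_m\}$ is strictly stronger than what the subfamily needs, and the subfamily hypothesis holds.

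With both hypotheses checked, I would invoke Theorem~\ref{thm:decomposition} for the family $\{g_k\}$ and the set $Y$, obtaining
\[
\left(\Delta_{H_W} f\right)(x) = \sum_{k=1}^{\ell} \left(\Delta_{g_k} f\right)(x) = \sum_{i \in W} \left(\Delta_{h_i} f\right)(x) \quad \forall x \in Y,
\]
which is exactly Eq.~\eqref{eqn:lattice}.

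The main obstacle — really the only nontrivial point — is the bookkeeping around the non-interaction quantifier: one must check the \emph{direction} of the set inclusion $W \setminus \{i_a,i_b\} \subseteq \{1,\ldots,m\}\setminus\{i_a,i_b\}$ and confirm that it makes the full-family hypothesis imply the subfamily hypothesis, and not the reverse. Everything else is a routine relabeling that leans on commutativity to identify $H_W$ with a standard ordered composition.
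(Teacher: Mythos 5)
Your proposal is correct and takes essentially the same approach as the paper, whose entire proof reads ``It is a simple application of Theorem~\ref{thm:decomposition} to the moves $h_i$ with $i \in W$.'' Your explicit check that the subfamily inherits the non-interaction hypothesis --- via the inclusion $W \setminus \{i_a,i_b\} \subseteq \{1,\ldots,m\}\setminus\{i_a,i_b\}$, which makes the full-family hypothesis strictly stronger than what the subfamily needs --- simply spells out the bookkeeping the paper leaves implicit.
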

\begin{proof}
    It is a simple application of Theorem~\ref{thm:decomposition} to the moves $h_i$ with $i \in W$.
\end{proof}

Let's now focus on the moves that are bijections. These moves form a group, the symmetric group (permutations) over set $X$, denoted here with $\mathcal{S}_X$. This group naturally \emph{acts} on the search space $X$. That is, we define the action of a bijective move $h$ over an element $x \in X$ as $h(x)$. We will be interested in this paper in a subgroup $G$ of $\mathcal{S}_X$ that is \emph{transitive}, that is, for any pair of elements $x,y \in X$ we can find a move $g \in G$ such that $g(x)=y$. If we work with such a group $G$, then we can fix one solution in the search space $x_0\in X$ as a \emph{base solution} and define a group function $f': G \rightarrow \mathbb{R}$ based on the objective function $f$ as $f'(g) = f(g(x_0))$. Since the group is transitive, any solution $x$ of the search space has at least one move $g$ such that $g(x_0)=x$. Optimizing $f'$ is equivalent to optimizing $f$, and now the domain of $f'$ is a group. We can represent both, solutions and moves, with elements of that group. 

\begin{example}
    In the context of pseudo-Boolean optimization, where the solutions are elements of $\mathbb{Z}_2^n$ (binary strings), we can also use binary strings to represent a move. The element $w \in \mathbb{Z}_2^n$ acts on solution $x$ by moving it to $x + w$ (bitwise XOR).
    In the permutation space, a move can also be a permutation $\sigma$ that acts on solution $\pi$ by left-composition, $\sigma \circ \pi$, or right-composition, $\pi \circ \sigma$. \qed
\end{example}

Based on the decomposition theorem (Theorem \ref{thm:decomposition}) we can propose a method to potentially develop an efficient hill climber and a general partition crossover operator. 

\subsection{Efficient hill climber}
\label{subsec:hill-climber}

The idea of the efficient hill climber (Algorithm~\ref{alg:hill-climber}) is to keep in memory the values $(\Delta_{h}f)(x)$ for the current solution $x$ of a set of moves $h \in N$. With a little abuse of notation, we will denote the vector of delta values, also called \emph{score vector},  with $\Delta f$.
Moves are classified into two classes: improving and non-improving\footnote{Alternatively, it is possible to use multiple classes based on the level of improvement.}. Assuming minimization, a move $h$ is improving if and only if $(\Delta_{h}f)(x) < 0$. At one iteration, the hill climber picks one of the improving moves $h$ and updates the current solution by applying $h$ to $x$ (see Line~\ref{lin:move} in Algorithm~\ref{alg:hill-climber}). It also updates the vector $(\Delta f)$ (Line~\ref{lin:update}).

\begin{algorithm}[!ht]
\begin{algorithmic}[1]
\WHILE{$(\Delta_{h}f)(x) < 0$ for some $h\in N$}
\label{lin:innerloop}
	\STATE $h \leftarrow $ selectImprovingMove($\Delta f$);
	\label{lin:select-move}
	\STATE updateDeltas($\Delta f$,$x$,$h$);
	\label{lin:update}
	\STATE $x \leftarrow h(x)$;
	\label{lin:move}
\ENDWHILE
\end{algorithmic}
\caption{Efficient hill climber}
\label{alg:hill-climber}
\end{algorithm}

The selection of an improving move can be done in constant time. In many cases, the update of $\Delta f$ can also be done in constant time if the set of moves $N$ is linear in the size of the problem. This makes it possible to run one step of the hill climber in constant time.
Now, we claim that there is no need to keep in memory the value $(\Delta_{h_2 \circ h_1}f)(x)$ if $h_1$ and $h_2$ are commutative and non-interacting. For the move $h_2 \circ h_1$ to be improving, we need at least one of $h_1$ or $h_2$ to be an improving move. Certainly, if $(\Delta_{h_2 \circ h_1}f)(x) = (\Delta_{h_1}f)(x) + (\Delta_{h_2}f)(x) < 0$ then the two values $(\Delta_{h_1}f)(x)$ and $(\Delta_{h_2}f)(x)$ cannot be greater or equal to zero at the same time. This result generalizes for any number of moves $h_i$ that are commutative and non-interacting. That means that we can explore a large neighborhood $N(x)$ around a solution by analyzing a much smaller number of moves. The key point here is to find the set of moves needed to explore the desired neighborhood. In the context of pseudo-Boolean optimization, it has been proven that for $k$-bounded pseudo-Boolean functions where each variable can only appear in a number of subfunctions, it is possible to explore a ball of constant size $t$ storing only a linear number of moves in memory~\cite{ChicanoWS14}.

\subsection{Partition crossover}
\label{subsec:partition-crossover}

Corollary~\ref{cor:lattice} also makes it possible to define a very effective crossover operator, called partition crossover (PX).
Let $x_1, x_2 \in X$ be two solutions that are connected by a move $g$, that is, $g(x_1)=x_2$. Partition crossover is based on the decomposition of $\Delta_{g} f$ for solution $x_1$. Let us write this decomposition as
\begin{equation}
\left(\Delta_{g} f\right)(x_1)  = \sum_{i=1}^{q} \left(\Delta_{h_i} f\right)(x_1),
\end{equation}
then each $h_i$ is called a \emph{component} in partition crossover. For each component, we decide if it is included or not in the offspring solution. We can write this more explicitly evaluating the delta function in $x_1$:
\begin{equation}
\left(\Delta_{g} f\right) (x_1) = f (x_2) - f(x_1) = \sum_{i=1}^{q} \left(f(h_i (x_1)) - f(x_1))\right) .
\end{equation}
If $(\Delta_{h_i} f)(x_1) < 0$ then $h_i$ is used to form the offspring solution, otherwise it is not included. We can now define the set $W$ of the corollary as follows: $W=\{i \,|\, 1\leq i\leq m, (\Delta_{h_i} f)(x_1) < 0\}$. The offspring is the solution $x^* = H_W(x_1)$. 
According to Corollary~\ref{cor:lattice}, this procedure necessarily finds the best solution from the set of $2^m$ solutions that can be generated by applying or not each of the $m$ components.

\subsection{The role of Fourier transforms}
\label{subsec:fourier-decomp}

Let us consider the case in which the moves are bijections and we can ground the objective function on a group $G$. We can use the Fourier transform of the objective function to help identify non-interacting moves to apply  Theorem~\ref{thm:decomposition}.

\begin{theorem}
\label{thm:decomposition-fourier}
    Let $G$ be a group, $f: G \rightarrow \mathbb{R}$ a function and $h_1, h_2 \in G$ two moves that commute. Then, the moves are non-interacting if and only if:
\begin{equation}
\label{eqn:decomposition-fourier}
\left(\rho_{\lambda}(h_1^{-1})-I\right)\left(\rho_{\lambda}(h_2^{-1})-I\right) \hat{f}(\lambda) = 0 \;\;\; \forall \lambda \in \text{irreps},
\end{equation}
where $\rho_{\lambda}$ is a representation of $G$ labeled with $\lambda$ and $\hat{f}(\lambda)$ is the Fourier transform of $f$ in that representation.
\end{theorem}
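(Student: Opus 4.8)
The plan is to recast the non-interaction condition as the vanishing of a single auxiliary function on $G$ and then read off its Fourier coefficients. The grounding fixes the convention: if a solution $x=g(x_0)$ is represented by $g$, then a move $h$ sends $g \mapsto hg$ (since $h(g(x_0))=(hg)(x_0)$), so $(\Delta_h f)(g)=f(hg)-f(g)$ with left multiplication. With this convention the non-interaction identity~\eqref{eqn:non-interaction-def} reads $f(h_2 h_1 g) - f(h_1 g) - f(h_2 g) + f(g) = 0$ for all $g \in G$. Call the left-hand side $\Phi(g)$. By the inverse Fourier transform~\eqref{eqn:inverse-fourier}, a function vanishes identically if and only if all of its Fourier coefficients at the irreps vanish; hence $\Phi \equiv 0$ is equivalent to $\hat\Phi(\rho_\lambda)=0$ for every irrep $\lambda$, and the whole proof reduces to computing $\hat\Phi$.

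First I would establish the translation rule for the Fourier transform: for fixed $a \in G$ and $f_a(g) := f(ag)$, the substitution $y = ag$ in~\eqref{eqn:fourier-transform}, together with $\rho(a^{-1}y)=\rho(a^{-1})\rho(y)$, gives $\hat{f_a}(\rho) = \rho(a^{-1})\hat f(\rho)$. This is the one genuinely load-bearing computation, and the delicate point is bookkeeping the conventions — which side the group acts on and whether $a$ or $a^{-1}$ appears — so that the extra factor lands to the \emph{left} of $\hat f$ and carries the inverse, matching the shape of~\eqref{eqn:decomposition-fourier}.

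Applying this rule term by term to $\Phi = f_{h_2 h_1} - f_{h_1} - f_{h_2} + f$ yields
\[
\hat\Phi(\rho) = \rho((h_2 h_1)^{-1})\hat f(\rho) - \rho(h_1^{-1})\hat f(\rho) - \rho(h_2^{-1})\hat f(\rho) + \hat f(\rho).
\]
Since $(h_2 h_1)^{-1} = h_1^{-1} h_2^{-1}$ and $\rho$ is a homomorphism, the first coefficient equals $\rho(h_1^{-1})\rho(h_2^{-1})$, and the bracket $\rho(h_1^{-1})\rho(h_2^{-1}) - \rho(h_1^{-1}) - \rho(h_2^{-1}) + I$ factors as $(\rho(h_1^{-1}) - I)(\rho(h_2^{-1}) - I)$, because $I$ commutes with every matrix. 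Thus $\hat\Phi(\rho_\lambda) = (\rho_\lambda(h_1^{-1}) - I)(\rho_\lambda(h_2^{-1}) - I)\hat f(\lambda)$, and requiring this to vanish for all $\lambda$ is exactly~\eqref{eqn:decomposition-fourier}.

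I would close by locating where the hypotheses enter. The factorization itself needs only that $\rho$ is a homomorphism, but the assumed commutativity of $h_1$ and $h_2$ is what makes $\Delta$-based non-interaction symmetric in the two moves and guarantees that the two factors $\rho_\lambda(h_i^{-1}) - I$ commute, matching the symmetry of~\eqref{eqn:decomposition-fourier}. The only nontrivial ingredient beyond the translation rule is the injectivity of the Fourier transform on the full set of irreps, which I take for granted from~\eqref{eqn:inverse-fourier}. I expect that completeness step and the convention-tracking in the translation rule to be the two places where care is required, with everything else being direct matrix algebra.
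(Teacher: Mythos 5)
Your proof is correct and follows essentially the same route as the paper's: the same change-of-variables computation showing that left translation by $a$ multiplies $\hat f(\rho)$ on the left by $\rho(a^{-1})$, followed by the same factorization of $\rho(h_1^{-1})\rho(h_2^{-1}) - \rho(h_1^{-1}) - \rho(h_2^{-1}) + I$ into $\bigl(\rho(h_1^{-1})-I\bigr)\bigl(\rho(h_2^{-1})-I\bigr)$. The only difference is presentational: you make explicit the appeal to injectivity of the Fourier transform on the full set of irreps (needed for the ``if'' direction), which the paper's proof uses implicitly by treating its chain of identities as equivalences.
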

\begin{proof}
Let us apply Eq.~\eqref{eqn:fourier-transform} (Fourier transform) to $\Delta_h f$:
\begin{align}
    \nonumber \sum_{g \in G} \left( \Delta_h f \right) (g) \rho_{\lambda}(g) &= \sum_{g \in G} \left(f(h \cdot g) - f(g) \right) \rho_{\lambda} (g) \\
    \nonumber &= \left( \sum_{g \in G} f(h \cdot g) \rho_{\lambda} (g) \right)  - \left( \sum_{g \in G} f(g) \rho_{\lambda} (g) \right) \\
    \nonumber \intertext{now using the change  $g'=h \cdot g$}
    \nonumber &= \left( \sum_{g' \in G} f(g') \rho_{\lambda} (h^{-1} \cdot g') \right)  - \left( \sum_{g \in G} f(g) \rho_{\lambda} (g) \right)\\
    \intertext{applying $\rho_{\lambda}(h^{-1} \cdot g')=\rho_{\lambda}(h^{-1}) \rho_{\lambda}(g')$} 
    \nonumber &= \rho_{\lambda} (h^{-1}) \left( \sum_{g' \in G} f(g') \rho_{\lambda} (g') \right)  - \left( \sum_{g \in G} f(g) \rho_{\lambda} (g) \right) \\
    \label{eqn:fourier-delta} &= \rho_{\lambda} (h^{-1}) \hat{f}(\lambda)  - \hat{f}(\lambda) = \left(\rho_{\lambda} (h^{-1})-I\right)\hat{f}(\lambda).
\end{align}

Let us now write Eq.~\eqref{eqn:non-interaction-char} as
\begin{equation}
\nonumber \Delta_{h_2 \cdot h_1} f - \Delta_{h_2} f - \Delta_{h_1} f = 0.
\end{equation}
If we apply Eq.~\eqref{eqn:fourier-delta} to the previous expression we get for all irreps $\lambda$
\begin{align*}
    0 &=\left[\rho_{\lambda} \left((h_2\cdot h_1)^{-1}\right)-I\right]\hat{f}(\lambda)
    -\left[\rho_{\lambda} (h_2^{-1})-I\right]\hat{f}(\lambda)
    -\left[\rho_{\lambda} (h_1^{-1})-I\right]\hat{f}(\lambda)
    \\
    &=\left[\rho_{\lambda} \left((h_2\cdot h_1)^{-1}\right)-I - \rho_{\lambda} (h_2^{-1})+I - \rho_{\lambda} (h_1^{-1})+I \right] \hat{f}(\lambda) \\
    &=\left[\rho_{\lambda} \left((h_2\cdot h_1)^{-1}\right) - \rho_{\lambda} (h_2^{-1}) - \rho_{\lambda} (h_1^{-1})+I \right] \hat{f}(\lambda) \\ 
    &=\left[\rho(h_1^{-1})  \rho(h_2^{-1}) - \rho_{\lambda} (h_2^{-1}) - \rho_{\lambda} (h_1^{-1})+I \right] \hat{f}(\lambda) \\
    &=\left(\rho_{\lambda}(h_1^{-1})-I\right)\left(\rho_{\lambda}(h_2^{-1})-I\right) \hat{f}(\lambda).
\end{align*}
\qed
\end{proof}

Observe that $\rho_\lambda$ is, in general, a matrix, as well as $\hat{f}(\lambda)$. Having the Fourier transform of $f$ it is possible to find pairs of commutative moves $h_1$ and $h_2$ that are non-interacting and can be used to produce a hill climber or partition crossover tailored to the particular instance of the problem.

\begin{example}
Since $\mathbb{Z}_2^{n}$ is commutative, each solution is a conjugacy class and the irreps are scalar functions labeled with the elements of $\mathbb{Z}_2^{n}$: the well-known Walsh functions, defined as $\varphi_{\lambda}(x)=\prod_{i=1}^{n}(-1)^{\lambda_i x_i} $, which take values $1$ and $-1$.  The moves $h_1$ and $h_2$ are also elements of $\mathbb{Z}_2^{n}$ (binary strings). We have for this case $h^{-1}=h$ (each element is its own inverse element) and $\rho_{\lambda}(x^{-1})=\rho_{\lambda}(x)=\varphi_{\lambda}(x)$. In this context, Eq.~\eqref{eqn:decomposition-fourier} is satisfied, and two moves $h_1$ and $h_2$ are non-interacting if for all $\lambda$ for which there is a nonzero Walsh coefficient in the Fourier transform of $f$, $\hat{f}(\lambda) \neq 0$, we have that either $\varphi_{\lambda}(h_1)=1$ or $\varphi_{\lambda}(h_2)=1$. \qed
\end{example}

\section{Previous gray-box operators}
\label{sec:previous-gb-operators}

In this section, we will prove that some gray-box operators previously published in the literature fit in the framework.
In the binary case, the new framework provides new opportunities to improve the operators.

\subsection{Partition crossover in the TSP}
\label{subsec:gapx-tsp}

Let us focus on the Generalized Asymmetric Partition Crossover (GAPX) by Tinós et al.~\cite{TinosWO14}, which was defined for the asymmetric TSP. The basic move we will need in this case is an \emph{insertion}. We will denote with $h_{i,j}$ with $i\neq j$ the insertion of city $i$ before city $j$ in the solution. 
\begin{example}
In Figure~\ref{fig:gapx-tsp} we can see that city $2$ is removed from its position and placed before city $20$ when we move from Solution 1 to Solution 2. We represent that move with $h_{2,20}$. Two insertion moves $h_{i,j}$ and $h_{i',j'}$ are commutative if the sets $\{i,j\}$ and $\{i',j'\}$ are disjoint. That is, if they ``involve'' four different cities. \qed
\end{example}

Let's follow the working principles of GAPX using the example in Figure~\ref{fig:gapx-tsp}. GAPX creates a combined graph with all the arcs of the two parent solutions. Next, it removes the arcs that are common to both parents (gray in the figure). Then, it finds the connected components in the graph. In one connected component, there are entry nodes and exit nodes. The arcs in the component connect each entry node to one exit node. If the entry-to-exit connections are the same in both parent solutions, then the decision of which parent to use for the arcs of that component can be taken independently of the other components. 

\begin{figure}[!th]
\includegraphics[scale=0.4]{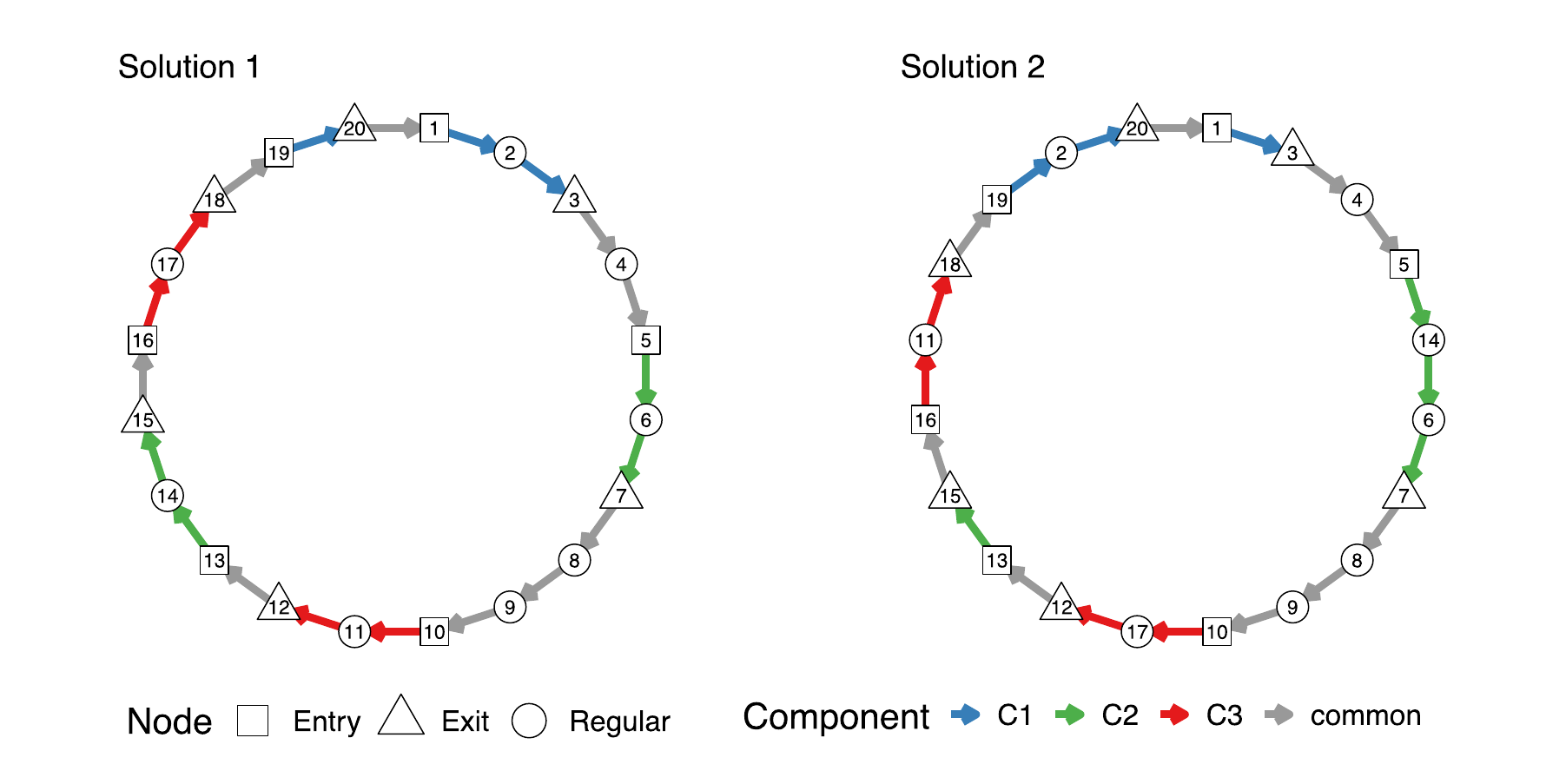}
\caption{Two parent solutions of GAPX in TSP.}
\label{fig:gapx-tsp}
\end{figure}

\begin{example}
In Figure~\ref{fig:gapx-tsp}, there are three components and the arcs in the different components are given different colors (red, blue and green). The entry nodes of the red component are 10 and 16. The exit nodes of the red component are 12 and 18. Observe that in both parents, 10 is connected to 12 in the component and 16 is connected to 18. Thus, in Solution 1 we could replace the arcs in red with the arcs in red in Solution 2 and we still have a valid solution (potential offspring). We can do this by applying insertion moves $h_{17,12}$ and $h_{11,18}$. These two moves commute but they are \emph{interacting}. 
We can easily see this in Table~\ref{tab:moves-solution1}, where the arcs $(16,11)$ and $(10,17)$ are only added by move $h_{17,12} \cdot h_{11,18}$. \qed
\end{example}
For two insertion moves to be non-interacting in a solution they must involve cities which are \emph{not} adjacent. In general, the moves that transform the paths in one component of Solution 1 to the paths in the same component of Solution~2 are non-interacting for Solution 1, even if we also apply the moves that transform the other components. Thus, Corollary~\ref{cor:lattice} is applicable.    
\begin{example}
Moves $h_{17,12}$, $h_{11,18}$, and $h_{17,12} \cdot h_{11,18}$ are non-interacting with move $h_{2,20}$, which transforms the blue component from Solution 1 to Solution 2. \qed
\end{example}

\begin{table}[!ht]
\centering
\caption{Arcs added and removed after applying different moves to Solution 1.}
\label{tab:moves-solution1}
\begin{tabular}{lllll}
\toprule
\multicolumn{1}{l}{\textbf{Moves}} & & \multicolumn{1}{l}{\textbf{Removed arcs}} & & \multicolumn{1}{l}{\textbf{Added arcs}} \\
\hline
$h_{11,18}$ & & $(10,11)$, $(11,12)$, $(17,18)$ & & $(10,12)$, $(17,11)$, $(11,18)$ \\
$h_{17,12}$ & & $(16,17)$, $(17,18)$, $(11,12)$ & & $(11,17)$, $(17,12)$, $(16,18)$ \\
$h_{17,12} \cdot h_{11,18}$ & & $(16,17)$, $(17,18)$, $(10,11)$, $(11,12)$ & & $(16,11)$, $(11,18)$, $(10,17)$, $(17,12)$ \\
\bottomrule
\end{tabular}
\end{table}

The previous argument shows that the way in which GAPX works produces a set of moves that are pairwise non-interacting. Thus, if GAPX finds $q$ components in this way, Corollary~\ref{cor:lattice} can be applied and the best of $2^q$ potential offspring is computed.
\begin{example}
In Figure~\ref{fig:gapx-tsp}, the pairwise non-interacting moves are $h_{17,12} \cdot h_{11,18}$ (red component), $h_{2,20}$ (blue component), and $h_{14,15}$ (green component). \qed
\end{example}

\subsection{Gray-box optimization in pseudo-Boolean functions}
\label{subsec:grabox-pseudo-boolean}

In the context of pseudo-Boolean optimization, we can represent solutions and moves with elements of $\mathbb{Z}_2^n$, which form a commutative group under addition. The gray-box operators defined in the literature are based on the concept of Variable Interaction Graph (VIG). A VIG is an undirected graph where the set of nodes is the set of Boolean variables and there is an edge between two variables when they interact.
Using the language of our framework, the nodes of the graph are the moves of size one (bit flip) and the edges represent interactions among those moves. Let us define $ones(g)=\{i \,|\, g_i=1, \text{for } 1 \leq i \leq n\}$. In terms of the Fourier (Walsh) transform, there is an interaction among two order-1 moves $g,h \in \mathbb{Z}_2^n$ when there is a nonzero Walsh coefficient $\hat{f}(\lambda)$ with $ones(\lambda) \cap ones(g) \neq \emptyset$ and $ones(\lambda) \cap ones(h) \neq \emptyset$. This means that the $\lambda$ vector is 1 in the positions where moves $g$ and $h$ are 1.
For higher-order moves $g, h \in \mathbb{Z}_2^n$, previous work says that they are non-interacting when the subgraphs induced by the moves in the VIG are not adjacent. That is, two moves $g,h \in \mathbb{Z}_2^n$ are non-interacting if for all $\lambda \in \mathbb{Z}_2^n$ at least one of the following three conditions is true: 1) $\hat{f}(\lambda)=0$, 2) $ones(\lambda) \cap ones(g) = \emptyset$, or 3) $ones(\lambda) \cap ones(h) = \emptyset$. If we denote with $\lambda g$ the dot product of $\lambda$ and $g$ in $(\mathbb{Z}_2^n,+,\cdot)$, we can observe that $\lambda g = 0$ when $ones(\lambda) \cap ones(g) = \emptyset$; and, thus, if one of the three previous conditions is true for all $\lambda$, then Eq.~\eqref{eqn:decomposition-fourier} is satisfied. As a consequence, moves $h$ and $g$ are non-interacting, and all the previous results on gray-box operators for pseudo-Boolean optimization (including hill climbers and partition crossover) are explained in the context of our new mathematical framework.

But our framework has more to say. There are cases in which $ones(\lambda) \cap ones(g) \neq \emptyset$ and still $\lambda g=0$. In particular, this happens exactly when $|ones(\lambda) \cap ones(g)|$ is even, because the sum in the dot product is the sum in $\mathbb{Z}_2$.

\begin{example}
\label{example:extra-decomposition}
Let us assume that we have a function $f$ that depends on three variables: $x_1$, $x_2$ and $x_3$, and let us assume that the Walsh transform provides nonzero values for $\hat{f}(100)=w_1$, $\hat{f}(010)=w_2$ and $\hat{f}(001)=w_3$, $\hat{f}(110)=w_{1,2}$ and $\hat{f}(111)=w_{1,2,3}$. Now we wonder if we can decompose move $h=111$ (all bits changing). A decomposition based on the VIG would not decompose this move. Thus, a Hamming Ball Hill Climber~\cite{ChicanoWS14} with radius 3 would need to store the score for move $111$ and a partition crossover would only find one component if the two parents differ in the three bits. This is due to the nonzero value for $\hat{f}(111)$.
However, a detailed analysis based on the Fourier (Walsh) transform reveals that this move can be decomposed into two moves, $h_1=110$ and $h_2=001$, as if $w_{1,2,3}$ were zero. The reason is that for move $h_1=110$ we have $\varphi_{111}(h_1)=(-1)^{111 \cdot 110}=1$, $\varphi_{110}(h_1)=1$ and $\varphi_{001}(h_1)=1$; and for move $h_2=001$ we have $\varphi_{100}=1$ and $\varphi_{010}=1$. These are the only nonzero Walsh coefficients and Eq.~\eqref{eqn:decomposition-fourier} is satisfied.

We can interpret this decomposition in another way. According to the inverse Fourier transform, Eq.~\eqref{eqn:inverse-fourier}, we can write $f$ in terms of the Fourier coefficients as
\begin{align*}
    f(x) = \frac{1}{8} &\left(
    w_1 \varphi_{100}(x) +
    w_2 \varphi_{010}(x) +
    w_3 \varphi_{001}(x) +
    w_{1,2} \varphi_{110}(x) +
    w_{1,2,3} \varphi_{111}(x)
    \right) .
\end{align*}

Move $h_1=110$ will only affect the coefficients $w_1$ and $w_2$ in the Fourier transform (it changes sign of $\varphi_{100}(x)$ and $\varphi_{010}(x)$), while move $h_2=001$ will only affect the coefficients $w_{1,2,3}$ and $w_{3}$. Since the coefficients affected by the moves are different, then there is no interaction between the moves.

However, this decomposition does not mean we can ignore the Fourier coefficient $w_{1,2,3}$ for
all possible purposes. For example, let us assume the input for partition crossover are the two solutions $x_1 =011$ and $x_2= 000$. The move $011$ cannot be decomposed in $001$ and $010$ because these moves both affect the sign of $w_{1,2,3}$. 
Thus, the decomposition opportunities depend on the concrete parent solutions. Partition crossover can find the decomposition opportunities dynamically or we can also statically pre-compute some opportunities depending on the values of some particular variables to speed up the process.
\qed
\end{example}

\section{New gray-box operators from the unifying framework}
\label{sec:new-gb-operators}

We finish this work by illustrating how the new mathematical framework for gray-box optimization opens the door to the design of new gray-box operators. In particular, we focus on two NP-hard permutation problems: the Linear Ordering Problem (LOP)~\cite{MartiReinelt2011} and Single Machine Total Weighted Tardiness Problem (SMTWTP)~\cite{LENSTRA1977}. 
For LOP the goal is to reorder the columns and rows of a matrix to minimize the sum of the upper diagonal. The  objective function is
\begin{equation}
    \label{eqn:lop}
    f_{\rm LOP}(\sigma) = \sum_{i=1}^n \sum_{j=i+1}^n A_{\sigma(i),\sigma(j)},
\end{equation}
where  $A$ is a $n \times n$ real matrix. In SMTWTP the goal is to find an order for $n$ jobs minimizing the weighted tardiness. The objective function is
\begin{equation}
    \label{eqn:smtwtp}
    f_{\rm SMTWTP}(\sigma) = \sum_{i=1}^{n} w_{\sigma(i)} \cdot T_{\sigma(i)},
\end{equation}
where $w_j$ and $T_{j}$ is the weight and tardiness of the $j$-th job, respectively. Each job $j$ also has a due time $d_j$ and a processing time $t_j$. The tardiness for job $j$ is $T_j = \max \{0, C_j - d_j\}$, where $C_j = \sum_{i=1}^{\sigma^{-1}(j)} t_{\sigma(i)}$ is the completion time for job $j$, which is the sum of the processing time of job $j$ and the preceding jobs.

It is easy to see that in both problems re-arranging sets of consecutive elements in the permutation does not affect the contributions to the fitness function of the elements below or above the set. Given a solution $\sigma$ for LOP, if we apply a move that arbitrarily permutes the elements in positions $i$ to $j$, the term $A_{\sigma(l), \sigma(k)}$ for $i\leq l \leq j$ and $k > j$ will still be part of the value of the objective function for the perturbed solution. The same happens for the terms $A_{\sigma(k), \sigma(l)}$ with $i\leq l \leq j$ and $k < i$. In SMTWTP, it is also easy to see that the tardiness of jobs in positions above $j$ or below $i$ is not affected by an arbitrary permutation of the elements between positions $i$ and $j$ and, thus, their contribution to the weighted tardiness (objective function) is unaffected.

This property allows us to confirm that Eq.~\eqref{eqn:delta-sum} is fulfilled when $h_1$ and $h_2$ are permutations affecting disjoint consecutive elements. More precisely, if $e(h_1)$ is the set of elements in the permutation changed by $h_1$ and $e(h_2)$ is the set of elements changed by $h_2$, then we should have $[\min(e(h_1)),\max(e(h_1))] \cap [\min(e(h_2)),\max(e(h_2))] = \emptyset$ for Eq.~\eqref{eqn:delta-sum} to be true in LOP and SMTWTP.

Based on this decomposition, we can immediately design an efficient hill climber and a partition crossover. Regarding the hill climber, we only need to store the values $\Delta_{h} f$ for moves $h$ that permute adjacent elements in the permutation. 
\begin{example}
We could consider the case in which $h$ are swaps (exchange of adjacent positions). There are $n-1$ possible swaps in a permutation. We could also consider the case of any arbitrary permutation for three consecutive elements. There are $n-2$ sets of three consecutive elements and five non-trivial permutations of three elements, so we have $5(n-2)$ values to store.
\qed
\end{example}
In general, this approach will require $O(n)$ memory. If a move is taken during the hill climbing, then updating the $\Delta_{h} f$ values has to be done only for \emph{overlapping} moves, which are a constant number, and can be done in constant time.

Regarding the construction of a partition crossover operator, given two parent solutions $\sigma_1$ and $\sigma_2$, the idea is to compute the decomposition for $\sigma_2 \cdot \sigma_1^{-1}$. We can find a decomposition by finding permutations of consecutive elements in $\sigma_2 \cdot \sigma_1^{-1}$. The pseudo-code for this is in Algorithm~\ref{alg:lop-px}.

\begin{algorithm}
\caption{Partition crossover for LOP/SMTWTP}
\label{alg:lop-px}
\begin{algorithmic}[1]
\STATE $\sigma_* \leftarrow \sigma_1$
\STATE $i \leftarrow 1$
\WHILE{$i \leq n$}
\STATE $h \leftarrow (1)$ // Initialize with identity permutation
\STATE $l \leftarrow i$
\STATE $h(l) \leftarrow \sigma_2(\sigma_1^{-1}(l))$
\STATE $j \leftarrow h(l)$
\WHILE{$l < j$}
\STATE $l \leftarrow l+1$
\STATE $h(l) \leftarrow \sigma_2(\sigma_1^{-1}(l))$
\STATE $j \leftarrow \max(j,h(l))$
\ENDWHILE { // Component $h$ found between positions $i$ and $j$}
\IF{$\left(\Delta_h f\right)(\sigma_1) < 0$}
\STATE $\sigma_* \leftarrow h(\sigma_*)$
\ENDIF
\STATE $i \leftarrow i+1$
\ENDWHILE
\RETURN $\sigma_{*}$
\end{algorithmic}
\end{algorithm}

The operators here defined for LOP and SMTWTP can also be applied with a slight variation to the TSP. First, we need to consider that the TSP has a rotational symmetry and, thus, we need to consider that element $n$ is adjacent to element $1$. And second, in TSP if $h_1$ touches element $i$ and $h_2$ touches element $i+1$, then $h_1$ and $h_2$ will interact (they both affect edge $(\sigma(i),\sigma(i+1))$). Thus, we need to ensure that this does not happen. It is not enough that $e(h_1) \cap e(h_2) = \emptyset$, we also need that no element in $e(h_1)$ is adjacent to an element in $e(h_2)$.

\section{Conclusions}
\label{sec:conclusions}

This paper proposes a mathematical framework that unifies two gray-box operators defined in the literature: Hamming Ball Hill Climber (HBHC) and Partition Crossover (PX). The framework also generalizes the results in the literature and provides a mathematical background to define particular versions of these operators in search spaces different from the ones used in the past, which are permutations and binary strings. 
With the help of the Fourier transform over finite groups we can identify opportunities to design more effective operators.


This paper opens a new research line in gray-box optimization, allowing faster development of new gray-box operators and a new re-design of the old ones to make them more effective. There are some open questions that can be explored in future work, like the extension of the framework to sets of non-commutative moves or the research for the most appropriate representation of a problem to get profit from the efficient gray-box operators.
In practice, it would be interesting to re-design the operators HBHC and PX for pseudo-Boolean optimization taking into account the new opportunities for decomposition revealed by Theorem~\ref{thm:decomposition-fourier}. New gray-box operators can be designed for combinatorial optimization problems for which it was not clear how to apply the existing gray-box operators.

\begin{credits}
\subsubsection{\ackname} 

This research is partially funded by project PID 2020-116727RB-I00 (HUmove) funded by MCIN/AEI/ 10.13039/501100011033; TAILOR ICT-48 Network (No 952215) funded by EU Horizon 2020 research and innovation programme; Junta de Andalucia, Spain, under contract QUAL21 010UMA; and the University of Malaga (PAR 4/2023). This work is also partially funded by a National Science Foundation (NSF) grant to D. Whitley, Award Number: 1908866.

\subsubsection{\discintname}
The authors have no competing interests to declare that are
relevant to the content of this article.
\end{credits}

%
%
%
\bibliographystyle{splncs04}
\bibliography{biblio,genetic}
\end{document}